\documentclass{article}

% if you need to pass options to natbib, use, e.g.:
%     \PassOptionsToPackage{numbers, compress}{natbib}
% before loading neurips_2025

% The authors should use one of these tracks.
% Before accepting by the NeurIPS conference, select one of the options below.
% 0. "default" for submission
\usepackage[dblblindworkshop, final]{neurips_2025}
\usepackage{subcaption}
\usepackage{multirow}

\usepackage[utf8]{inputenc} % allow utf-8 input
\usepackage[T1]{fontenc}    % use 8-bit T1 fonts
\usepackage{hyperref}       % hyperlinks
\usepackage{url}            % simple URL typesetting
\usepackage{booktabs}       % professional-quality tables
\usepackage{amsfonts}       % blackboard math symbols
\usepackage{nicefrac}       % compact symbols for 1/2, etc.
\usepackage{microtype}      % microtypography
\usepackage{xcolor}         % colors
\hypersetup{
    colorlinks,
    linkcolor={red!50!black},
    citecolor={blue!50!black},
    urlcolor={blue!80!black}
}
\usepackage[ruled,vlined,linesnumbered]{algorithm2e}
\usepackage{amsmath}
\DeclareMathOperator*{\argmax}{arg\,max}

\usepackage{bbm}
\usepackage[colorinlistoftodos,prependcaption,textsize=tiny]{todonotes}

%%%%%%%%%%%%%%%%%%%%%%%%%%%%%%%%
% THEOREMS
%%%%%%%%%%%%%%%%%%%%%%%%%%%%%%%%
\usepackage{amsmath,amssymb,amsthm}
\usepackage{thmtools}
\usepackage{thm-restate}

\usepackage{amssymb}
\usepackage{mathtools}
\usepackage{amsthm}
\theoremstyle{plain}
\newtheorem{theorem}{Theorem}[section]

\newtheorem{lemma}[theorem]{Lemma}

\workshoptitle{Efficient Reasoning}

% Note. For the workshop paper template, both \title{} and \workshoptitle{} are required, with the former indicating the paper title shown in the title and the latter indicating the workshop title displayed in the footnote. 
\title{\texttt{RoiRL}: Efficient, Self-Supervised Reasoning with Offline Iterative Reinforcement Learning}

% The \author macro works with any number of authors. There are two commands
% used to separate the names and addresses of multiple authors: \And and \AND.
%
% Using \And between authors leaves it to LaTeX to determine where to break the
% lines. Using \AND forces a line break at that point. So, if LaTeX puts 3 of 4
% authors names on the first line, and the last on the second line, try using
% \AND instead of \And before the third author name.

\author{%
  Aleksei Arzhantsev \\
  Criteo AI Lab, Ecole Polytechnique\\
  Paris, France\\
  \texttt{a.arzhantsev@criteo.com} \\
  % examples of more authors
  \And
    Otmane Sakhi \\
  Criteo AI Lab\\
  Paris, France\\
  \texttt{o.sakhi@criteo.com} \\
  \And
  Flavian Vasile \\
  Criteo AI Lab\\
  Paris, France\\
  \texttt{f.vasile@criteo.com} \\
  % Coauthor \\
  % Affiliation \\
  % Address \\
  % \texttt{email} \\
  % \AND
  % Coauthor \\
  % Affiliation \\
  % Address \\
  % \texttt{email} \\
  % \And
  % Coauthor \\
  % Affiliation \\
  % Address \\
  % \texttt{email} \\
  % \And
  % Coauthor \\
  % Affiliation \\
  % Address \\
  % \texttt{email} \\
}

\begin{document}

\maketitle

\begin{abstract}
Reinforcement learning (RL) is central to improving reasoning in large language models (LLMs) but typically requires ground-truth rewards. Test-Time Reinforcement Learning (\texttt{TTRL}) removes this need by using majority-vote rewards, but relies on heavy online RL and incurs substantial computational cost. We propose \texttt{RoiRL}: \textbf{R}easoning with \textbf{o}ffline \textbf{i}terative \textbf{R}einforcement \textbf{L}earning, a family of lightweight offline learning alternatives that can target the same regularized optimal policies. Unlike \texttt{TTRL}, \texttt{RoiRL} eliminates the need to maintain a reference model and instead optimizes \emph{weighted log-likelihood} objectives, enabling stable training with significantly lower memory and compute requirements. Experimental results show that \texttt{RoiRL} trains to $2.5\times$ faster and consistently outperforms \texttt{TTRL} on reasoning benchmarks, establishing a scalable path to self-improving LLMs without labels.
\end{abstract}

\section{Introduction}

% Talk about reasoning (). talk about the success of Reinforcement leaning. 

Reasoning \cite{reasoning} is at the core of large language model (LLM) capabilities, improving performance on mathematical problem solving \cite{reason_math}, commonsense inference \cite{reasoning, commonsense}, and agentic applications \cite{react}. Recent advances have demonstrated that reasoning ability can be enhanced not only by scaling model size and data but also by explicitly training models to generate and evaluate chains of thought \cite{deepseekr1}. Reinforcement learning (RL) \cite{Sutton} has played a particularly important role in this direction: RL aligns models generations with outcome quality, improving their ability to solve complex tasks.

However, RL-based approaches require access to \emph{ground-truth rewards}, mostly in the form of correctness labels (e.g., for math problems). This reliance can limit their scalability, since ground-truth supervision is costly and often unavailable. To circumvent this bottleneck, recent work has leveraged \emph{majority-vote} as a weak supervision signal: instead of relying on external labels, the model itself generates multiple candidate solutions, and majority voting \cite{selfconsistency} is used to estimate correctness. This idea has proven highly effective at inference time, where increasing the number of sampled solutions substantially improves accuracy through test-time scaling \cite{selfconsistency}.

Building on this observation, Test-Time Reinforcement Learning (\texttt{TTRL}) \cite{ttrl} has been proposed as a mechanism for turning majority-vote feedback, originally used in test time scaling, into a training signal. By repeatedly generating candidate chains of thought (CoT) \cite{reasoning} with their respective answers, evaluating them with majority voting, and updating the model parameters online, \texttt{TTRL} enables reasoning improvement without ground-truth labels. Empirically, this approach has demonstrated strong gains on reasoning benchmarks, validating the potential of self-generated feedback.

% Despite its promise, \texttt{TTRL} suffers from several critical limitations. First, it is computationally expensive: each training step requires multiple CoT samples for a single online update, making the method inefficient in exploiting the reward signal. Since CoT sampling occurs during training, the approach quickly saturates memory and becomes increasingly difficult to scale to larger models or longer training runs. Second, the online nature of \texttt{TTRL} introduces instability, with performance heavily dependent on hyperparameter tuning, as reported in \cite{ttrl}. These issues make \texttt{TTRL} challenging to deploy in practice and limit its applicability as a general recipe for scalable reasoning improvements.

Despite its promise, \texttt{TTRL} faces two critical limitations. First, it is \emph{computationally expensive}. The method requires maintaining a reference model and computing its logits at every training step. Combined with repeated chain-of-thought (CoT) sampling during training, this quickly saturates memory and makes the approach increasingly difficult to scale to larger models or longer runs. Second, its \emph{online nature introduces instability}. Performance is highly sensitive to hyperparameter choices, as also reported in \cite{ttrl}. These issues make \texttt{TTRL} challenging to deploy in practice and limit its applicability as a general recipe for scalable reasoning improvements.

Inspired by offline RL approaches \cite{offline_rl2, offline_rl1}, we introduce \texttt{RoiRL} (\textbf{R}easoning with \textbf{o}ffline \textbf{i}terative \textbf{R}einforcement \textbf{L}earning), a lightweight alternative that preserves the benefits of self-generated rewards while overcoming the limitations of \texttt{TTRL}. Our method optimizes simple \emph{weighted log-likelihood objectives} in an iterative offline loop, eliminating the need for online RL or maintaining a reference model. This design improves stability, reduces memory overhead, and scales efficiently with model size. On small-scale models and modest compute budgets, \texttt{RoiRL} trains faster and more efficiently, while consistently surpassing \texttt{TTRL} across reasoning benchmarks.

\textbf{Contributions.} We introduce a family of \emph{offline weighted log-likelihood} objectives, that can target and solve the same underlying problem of \texttt{TTRL} without requiring online RL, nor maintaining a reference model. We demonstrate empirically that \texttt{RoiRL}, which builds on these simple objectives, achieves superior performance and scalability, offering a practical path toward self-improving LLMs without reliance on true labels.
% \begin{itemize}
%     % \item We provide a formal analysis of , clarifying the objective it implicitly optimizes.
%     \item 
% \end{itemize}

\section{Preliminaries}

We assume access to a strong base LLM, typically pre-trained or instruction-tuned , which we denote by the policy $\pi_{0} = \pi_{\theta_0}$, . Given a prompt $x$, the model generates a chain-of-thought $c$ leading to the answer $y$, sampled as $\{c, y\} \sim \pi_0(\cdot \mid x)$. Alongside this model, we consider a collection of reasoning tasks for which ground-truth answers are unavailable. These tasks are represented as a prompt dataset $\mathcal{P}_n = \{x_i\}_{i \in [n]}$, where each $x_i$ corresponds to a question or input prompt and $n$ is the dataset size. Crucially, the dataset contains no labels, reflecting the realistic setting where large collections of problems are readily available, while solutions are not. 

\textbf{Test Time Reinforcement Learning.} Given $\mathcal{P}_n$,  \texttt{TTRL} \cite{ttrl} provides a learning algorithm to improve the reasoning abilities of $\pi_{\theta_0}$. For each reasoning task $x_i$, \texttt{TTRL} attributes an approximate ground truth label $\tilde{y}_i$ to $x_i$ on the fly. For each optimization step $t$, $\pi_{\theta_t}$ generates $k > 2$ candidates $\{c^\ell_i, y^\ell_i \}_{\ell \in [k]}$, defined with their respective CoT and answers. The answers $\{y^\ell_i \}_{\ell \in [k]}$ are used to define the approximate ground truth label $\tilde{y}_i$ as the label with the majority vote i.e. $\tilde{y}^k_i(\theta_t) = \texttt{maj}_{\ell \in [k]}(y^\ell_i)$. Rewards are then naturally attributed to the generated candidates and constructed as $\tilde{r}_k(y, x_i, \theta_t) = \mathbbm{1}\left[y = \tilde{y}^k_i(\theta_t)\right]$, augmenting data with rewards and enabling the use of RL algorithms to train  and optimize the parametrised LLM policy $\pi_\theta$. For instance, \texttt{TTRL} optimizes the KL regularized expected reward using GRPO \cite{grpo}:
\begin{align}
\label{eq:kl_regularised}
    \max_\theta\, \left\{\sum_{i = 1}^n \mathbb{E}_{(c, y)  \sim \pi_\theta(\cdot|x_i)} \left[\tilde{r}_k(y, x_i, \theta) \right] - \beta \operatorname{KL}(\pi_\theta, \pi_0|x_i) \right\}\,.
\end{align}
with $\operatorname{KL}(\pi_\theta, \pi_0|x)$ the KL divergence between $\pi_\theta(\cdot|x)$ and $\pi_0(\cdot|x)$, and $\beta > 0$ a regularization parameter, solving for an LLM that optimizes for the consistency of the generated answer while staying close to the original model. 

\textbf{Non-stationary rewards.} The particularity of the optimization problem in Equation~\eqref{eq:kl_regularised} is that the rewards are non-stationary and depend on the current policy $\pi_\theta$ we are optimizing. In step $t$, the reward of an answer $y$ is positive when it matches the majority vote  at $k$: $\tilde{y}^k(\theta_t)$. This means that the reward shifts when the majority vote changes in the optimization. This subtlety makes this approach differ from only distilling the majority voter back into the model.

% Fortunately, $\theta$, and thus the LLM needs to change a lot so that its majority vote switches, and as we are regularizing towards $\pi_0$, the majority vote changes slowly in the optimization procedure, for a negligible number of problems, and becomes stationary after few iterations $t$. This is confirmed in the experimental section \todo{To be confirmed by experiments.}. In this case, the solution of Equation~\eqref{eq:kl_regularised} can be approximated by the Gibbs policy, as discussed and proven in Appendix~\ref{}. 

% $\tilde{r}_k(y, x, \theta) \approx \tilde{r}_k(y, x, \theta_t)$ in the vicinity of $\pi_0$, optimizing the problem of Equation~\eqref{eq:kl_regularised} is well approximated by the classical stationary objective:
% \begin{align}
% \label{eq:kl_regularised_stat}
%     \max_\theta\, \left\{\mathbb{E}_{x, (c, y)  \sim \pi_\theta(\cdot|x)} \left[\tilde{r}_k(y, x, \theta_t) \right] - \beta \operatorname{KL}(\pi_\theta, \pi_0) \right\}\,
% \end{align}
% and gives the Gibbs policy as an optimal solution:
% \begin{align}
% \label{eq:opt_solution}
%     \pi_{\theta^\star}(y,c|x) \propto \exp\left(\frac{1}{\beta} \tilde{r}_k(y, x, \theta_t) \right)\pi_0(y,c|x)\,.
% \end{align}
% Substituting Equation~\eqref{eq:kl_regularised} by Equation~\eqref{eq:kl_regularised_stat} is valid and is proven in Appendix~\ref{} for completeness.

\section{Self Supervised Reasoning with Iterative Offline Reinforcement Learning}

\texttt{TTRL} optimizes the KL-regularized reward maximization objective of Equation~\eqref{eq:kl_regularised} using GRPO \cite{grpo} in an online setting. While effective, this procedure is computationally demanding: it requires maintaining a reference model in memory, repeatedly sampling potentially long answers during training, and computing their logits under both current and reference policies. This saturates GPU memory and limit scalability. In addition, the reliance on online RL makes the method highly sensitive to hyperparameter choices, leading to instability and unreliable performance in practice \citep{ttrl}. 

This raises a natural question: \emph{can we achieve the same objective with a procedure as simple and stable as supervised fine-tuning?} Building on offline RL methods \cite{offline_rl1}, we answer affirmatively with \texttt{RoiRL} (\textbf{R}easoning with \textbf{o}ffline \textbf{i}terative \textbf{R}einforcement \textbf{L}earning), an iterative, offline approach that address these limitations. In each iteration $m \ge 1$, \texttt{RoiRL} alternates between two steps:

\textbf{(1) Generation:} From the current policy $\pi_{m-1}$, we sample $k$ candidate solutions $\{c^\ell_i, y^\ell_i\}_{\ell \in [k]}$ for each prompt $x_i \in \mathcal{P}$. These candidates are scored with a majority-vote reward, $\tilde{r}_k(y^\ell_i, x_i, \theta_{m-1}) = \mathbbm{1}\left[y^\ell_i = \tilde{y}^k_i(\theta_{m-1})\right]$ with $\tilde{y}^k_i(\theta_{m-1}) = \texttt{maj}_{\ell \in [k]}(y^\ell_i)$. This produces an offline dataset:
    $$\mathcal{D}_{m-1} = \left\{x_i, \left\{c^\ell_i, y^\ell_i, \tilde{r}_k(y^\ell_i, x_i, \theta_{m-1})\right\}_{\ell \in [k]} \right\}_{i \in [n]}\,$$
\textbf{(2) Offline Update:} Using $\mathcal{D}_{m-1}$, we approximate and solve the weighted log-likelihood objective
\begin{align}\label{eq:roirl}
    \theta_{m} &= \argmax_\theta \left\{\sum_{i = 1}^n\mathbb{E}_{ (c,y) \sim \pi_{m-1}(\cdot|x_i)} \left[g_m(\tilde{r}_k(y,x_i, \theta_{m-1})) \log \pi_\theta(c,y|x_i) \right] \right\}\,,
\end{align}
with $g_m: \mathbb{R} \rightarrow \mathbb{R}$ an increasing reward transform. We then update the policy as $\pi_m \leftarrow \pi_{\theta_m}$.

The resulting optimization routine is summarized in Algorithm~\ref{alg:RoiRL}. Equation~\eqref{eq:roirl} can be interpreted as a \emph{weighted supervised fine-tuning loss} on generated answers, in contrast to the unstable online updates of \texttt{TTRL}. \texttt{RoiRL} is more stable and alleviates the need for $\pi_0$, making it significantly more scalable. 

\RestyleAlgo{ruled}
\begin{restatable}{algorithm}{alg} 
\caption{\textbf{R}easoning with \textbf{o}ffline \textbf{i}terative \textbf{R}einforcement \textbf{L}earning (\texttt{RoiRL})}\label{alg:RoiRL}
\textbf{Input}: Policy $\pi_\theta$, reward transforms $(g_m)_{m\in\mathbb{N}}$, prompt dataset $\mathcal{P} = \{x_i\}_{i \in [n]}$, number of candidates $k$. \\
\textbf{Initialize}: $\theta=\theta_0$, $\pi_0=\pi_{\theta_0}$ \\
\For{$m=1,2,\dots$}{
Construct offline dataset $\mathcal{D}_{m-1}$ with $\pi_{m-1}$ \\
Update parameters $\theta_m$ by solving Equation~\eqref{eq:roirl} \\
Set $\pi_m \leftarrow \pi_{\theta_m}$
}
\end{restatable}

\textbf{Connection to KL-Regularized Objectives.} Unlike \texttt{TTRL}, which explicitly enforces KL regularization, \texttt{RoiRL} leverages a sequence $(g_m)_{m\in\mathbb{N}}$ of reward transform $g_m: \mathbb{R} \to \mathbb{R}\,, \forall m$ that implicitly control the reward influence. At iteration $m$, the analytical solution of Algorithm~\ref{alg:RoiRL} takes the form:
\begin{align}
    \forall (c,y), x\,, \quad \pi_m(c,y|x) \propto \left(\prod_{j =1}^m g_j(\tilde{r}_k(y,x, \theta_{j-1})\right) \pi_0(c,y|x)\,.
\end{align}
For example, choosing $g_j(r) = g_\beta(r) = \exp(r/\beta)\,, \forall j$ yields
\begin{align*}
    \forall (c,y), x\,, \quad \pi_m(c,y|x) \propto \exp\left(\frac{1}{\beta}\sum_{j =1}^m \tilde{r}_k(y,x, \theta_{j-1})\right) \pi_0(c,y|x)\,,
\end{align*}
which closely mirrors the closed-form solution of KL-regularized RL objectives widely used in preference alignment \cite{rlhf, dpo} and reasoning \cite{grpo} for example. A proof of the analytical solution is provided in Appendix~\ref{app:solution}. In particular, the following proposition connects \texttt{RoiRL} and \texttt{TTRL}.
\begin{restatable}{proposition}{prop}
   For any $\beta > 0$, there exists a choice of the reward transforms $(g_m)_{m\in\mathbb{N}}$ such that Equation~\eqref{eq:kl_regularised} and Algorithm~\ref{alg:RoiRL} admit the same solution. 
\end{restatable}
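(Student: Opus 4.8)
The plan is to exhibit an explicit family $(g_m)_{m\in\mathbb{N}}$ together with a single policy $\pi^\star$ that is at once a solution of Equation~\eqref{eq:kl_regularised} and the limit of the iterates of Algorithm~\ref{alg:RoiRL}. Throughout, I read the majority vote at the population level, $\tilde{y}^k_i(\theta) = \argmax_y p_\theta(y\mid x_i)$ with $p_\theta(y\mid x) = \sum_c \pi_\theta(c,y\mid x)$---the quantity that finite-$k$ majority voting estimates---and I assume this plurality answer is uniquely attained at every policy that appears below; ties are discussed at the end.

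First I would recall the two relevant closed forms. For a \emph{fixed} reward, the maximizer of Equation~\eqref{eq:kl_regularised} over a sufficiently expressive policy class is the Gibbs tilt $\pi \propto \pi_0\exp(\tilde{r}_k/\beta)$, the standard closed form of KL-regularized RL. Because the reward in Equation~\eqref{eq:kl_regularised} is non-stationary, depending on $\theta$ only through the majority answer, the natural notion of a \emph{solution}---consistent with the GRPO dynamics \texttt{TTRL} employs---is a self-consistent policy $\pi^\star = \pi_{\theta^\star}$ with
\begin{align*}
 \pi^\star(c,y\mid x) \propto \pi_0(c,y\mid x)\,\exp\!\Big(\tfrac{1}{\beta}\,\mathbbm{1}\big[y=\tilde{y}^k_x(\theta^\star)\big]\Big)\,,
\end{align*}
the majority answer inside the exponential being itself induced by $\pi^\star$. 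On the \texttt{RoiRL} side, the analytic solution stated just before the proposition (proved in Appendix~\ref{app:solution}) gives, after $m$ iterations,
\begin{align*}
 \pi_m(c,y\mid x) \propto \pi_0(c,y\mid x)\,\prod_{j=1}^{m} g_j\big(\tilde{r}_k(y,x,\theta_{j-1})\big)\,.
\end{align*}

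Next I would choose the transforms. Note that the uniform choice $g_j=g_\beta$ from the example above \emph{does not} serve: it accumulates rewards in the exponent and drives $\pi_m$ to collapse onto the mode. Instead, let $a_i=\tilde{y}^k_i(\theta_0)$ be the base model's plurality answer on $x_i$, fix any positive sequence $(\beta_m)_{m\ge1}$ with $\sum_{m\ge1} 1/\beta_m = 1/\beta$, and set $g_m(r)=\exp(r/\beta_m)$, which is strictly increasing as required. The crux is a \emph{mode-preservation} lemma: multiplying the mass of the unique mode of a distribution by a factor $>1$ while leaving the remaining masses proportional keeps that same answer as the strict mode. Applying it inductively along the trajectory, $\pi_1 \propto \pi_0\exp(\mathbbm{1}[y=a_i]/\beta_1)$ upweights precisely $a_i$, so its plurality answer is still $a_i$ and $\tilde{r}_k(y,x_i,\theta_1)=\mathbbm{1}[y=a_i]=\tilde{r}_k(y,x_i,\theta_0)$; iterating, $\tilde{r}_k(y,x_i,\theta_{j-1})=\mathbbm{1}[y=a_i]$ for every $j$, the product telescopes, and $\pi_m(c,y\mid x_i) \propto \pi_0(c,y\mid x_i)\exp\!\big((\sum_{j\le m}1/\beta_j)\,\mathbbm{1}[y=a_i]\big)$. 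Letting $m\to\infty$ yields $\pi^\star(c,y\mid x_i) \propto \pi_0(c,y\mid x_i)\exp(\mathbbm{1}[y=a_i]/\beta)$, whose plurality answer is again $a_i$ by the same lemma; hence $\pi^\star$ is exactly the self-consistent Gibbs policy above with $\tilde{y}^k_{x_i}(\theta^\star)=a_i$---so it solves Equation~\eqref{eq:kl_regularised}---and it is simultaneously the limit of Algorithm~\ref{alg:RoiRL} under $(g_m)$. (A one-shot variant, $g_1(r)=\exp(r/\beta)$ with $g_m$ constant for $m\ge2$, reaches the same $\pi^\star$ already at $m=1$.)

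The step I expect to be the main obstacle is not the algebra but pinning down what ``solution of Equation~\eqref{eq:kl_regularised}'' should mean in the non-stationary setting and making the mode-preservation argument watertight---in particular, handling ties and the gap between genuine finite-$k$ majority voting and its population limit. If one insists on finite-$k$ voting, the deterministic lemma must be replaced by a stochastic-dominance statement---upweighting $a_i$ can only increase the probability that $a_i$ wins the $k$-sample vote---which still keeps $a_i$ the majority whenever it was already a strict majority, at the cost of a mild genericity assumption.
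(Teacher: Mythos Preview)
Your argument is correct and reaches the same self-consistent fixed point
\[
\pi^\star(c,y\mid x)\ \propto\ \pi_0(c,y\mid x)\,\exp\!\big(\tfrac{1}{\beta}\,\tilde r_k(y,x,\pi^\star)\big)
\]
that the paper identifies as the solution of Equation~\eqref{eq:kl_regularised}, but the route is genuinely different. The paper first derives this fixed-point equation from the Lagrangian of Equation~\eqref{eq:kl_regularised} (arguing that the extra term coming from $\partial \tilde r_k/\partial\pi$ vanishes because the reward is an indicator), and then realizes the \emph{fixed-point iteration} $\pi_m\propto\exp(\tilde r_k(\cdot,\pi_{m-1})/\beta)\,\pi_0$ inside Algorithm~\ref{alg:RoiRL} by choosing $g_m=\exp\!\big((\tilde r_k(\cdot,\theta_{m-1})-\tilde r_k(\cdot,\theta_{m-2}))/\beta\big)$, i.e.\ a baseline-corrected transform that undoes the previous tilt so that every step is anchored at $\pi_0$. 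You instead pick bona fide scalar transforms $g_m(r)=\exp(r/\beta_m)$ with $\sum_m 1/\beta_m=1/\beta$ and use a mode-preservation lemma to show the majority answer never changes along the trajectory, so the product telescopes and the limit is explicit.

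What each buys: your construction keeps $g_m:\mathbb{R}\to\mathbb{R}$ exactly as stated in the algorithm (the paper's $g_m$ quietly depends on $(y,x)$ through the previous reward, a mild abuse of the stated signature), and you actually \emph{prove} convergence of the iterates, whereas the paper only asserts that iterating the fixed-point map solves it. The price is your genericity assumption---a unique plurality at $\pi_0$---and the fact that you can only reach the fixed point anchored at $\pi_0$'s mode; when $\beta$ is small the self-consistency equation can have several fixed points, and the paper's scheme at least \emph{formally} targets whichever one its iteration lands on. Both proofs leave the same loose end, namely what ``solution'' means when Equation~\eqref{eq:kl_regularised} has multiple stationary points; you are explicit about this, which is a plus.
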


This result is proven in details in Appendix~\ref{app:equivalence}, and shows that \texttt{RoiRL} can target the same theoretical objective as \texttt{TTRL}, while being more stable, scalable, and practically implementable. Moreover, by flexibly choosing $g$, and thus controlling the reward influence on the updates, \texttt{RoiRL} extends beyond \texttt{TTRL} to encompass a broader family of objectives, including known regularized objectives \cite{beyond_kl}.

\section{Experiments}

\begin{table*}[t]
  \centering
  \caption{Results are reported for training on unlabeled problems from MATH500 \emph{Train} and evaluating on all datasets. \texttt{RoiRL} outperforms \texttt{TTRL} in most cases. For each decoding strategy, the second-best result is underlined, the best result is in bold, and marked with $\star$ when it beats $\pi_0$ with $\texttt{maj}_{128}$.}
  \label{tab:results}
  \resizebox{\textwidth}{!}{%
  \begin{tabular}{cl|lccc|cccc|cccc}
    \toprule
    Decode & Model & \multicolumn{4}{c}{Qwen2.5-Math-1.5B} & \multicolumn{4}{c}{Phi4-mini-reasoning-4B} & \multicolumn{4}{c}{Llama-3.2-3B-Instruct} \\
    \cmidrule(lr){3-6} \cmidrule(lr){7-10} \cmidrule(lr){11-14}
     & & \multicolumn{2}{c}{MATH500} & AMC & AIME & \multicolumn{2}{c}{MATH500} & AMC & AIME & \multicolumn{2}{c}{MATH500} & AMC & AIME \\
    \cmidrule(lr){3-4} \cmidrule(lr){7-8} \cmidrule(lr){11-12}
     & & \emph{Train} & \emph{Test} & & & \emph{Train} & \emph{Test} & & & \emph{Train} & \emph{Test} & & \\
    \midrule
    
    % --- maj_1
    \multirow{4}{*}{$\texttt{maj}_{1}$} 
      & \texttt{Base} ($\pi_0$) & $0.244$ & $0.239$ & $0.170$ & $0.036$ & $0.210$ & $0.160$ & $0.071$ & $0.000$ & $0.256$ & $0.295$ & $0.141$ & $0.050$ \\
      & \texttt{TTRL} & $0.307$ & $0.298$ & $0.214$ & $0.026$ & $0.272$ & $0.225$ & $0.090$ & $0.000$ & $0.361$ & $\boldsymbol{0.394}$ & $\underline{0.159}$ & $0.043$ \\
      & \texttt{RoiRL} $g_I$ & $\boldsymbol{0.686}$ & $\underline{0.587}$ & $\underline{0.337}$ & $\boldsymbol{0.083}$ & $\boldsymbol{0.660}^\star$ & $\boldsymbol{0.511}$ & $\boldsymbol{0.246}$ & $\boldsymbol{0.016}^\star$ & $\underline{0.395}$ & $\underline{0.376}$ & $\boldsymbol{0.198}$ & $\boldsymbol{0.060}$ \\
      & \texttt{RoiRL} $g_\beta$ & $\underline{0.670}$ & $\boldsymbol{0.604}$ & $\boldsymbol{0.340}$ & $\underline{0.070}$ & $\underline{0.533}$ & $\underline{0.344}$ & $\underline{0.125}$ & $0.000$ & $\boldsymbol{0.487}$ & $0.256$ & $0.090$ & $0.020$ \\
    \midrule
    
    % --- maj_10
    \multirow{4}{*}{$\texttt{maj}_{10}$} 
      & \texttt{Base} ($\pi_0$) & $0.572$ & $0.520$ & $0.445$ & $0.100$ & $0.420$ & $0.350$ & $0.157$ & $0.000$ & $0.495$ & $0.480$ & $0.253$ & $0.033$ \\
      & \texttt{TTRL} & $0.625$ & $0.560$ & $0.469$ & $0.066$ & $0.483$ & $0.460$ & $0.193$ & $0.000$ & $\boldsymbol{0.510}$ & $0.490$ & $\boldsymbol{0.313}$ & $\underline{0.167}$ \\
      & \texttt{RoiRL} $g_I$ & $\boldsymbol{0.712}$ & $\boldsymbol{0.690}^\star$ & $\boldsymbol{0.518}^\star$ & $\underline{0.133}$ & $\boldsymbol{0.720}^\star$ & $\boldsymbol{0.680}^\star$ & $\boldsymbol{0.421}^\star$ & $\boldsymbol{0.067}^\star$ & $\underline{0.508}$ & $\underline{0.520}$ & $\boldsymbol{0.313}$ & $\boldsymbol{0.200}^\star$ \\
      & \texttt{RoiRL} $g_\beta$ & $\underline{0.685}$ & $\underline{0.650}$ & $\underline{0.469}$ & $\boldsymbol{0.200}$ & $\underline{0.543}$ & $\underline{0.560}$ & $\underline{0.277}$ & $\underline{0.033}$ & $\underline{0.508}$ & $\boldsymbol{0.530}^\star$ & $0.229$ & $0.100$ \\
    \midrule
    
    % --- maj_128
    \multirow{1}{*}{$\texttt{maj}_{128}$} 
      & \texttt{Base} ($\pi_0$) & $0.717$ & $0.680$ & $0.506$ & $0.233$ & $0.563$ & $0.560$ & $0.289$ & $0.000$ & $0.543$ & $0.520$ & $0.361$ & $0.167$ \\
    \bottomrule
  \end{tabular}}
\end{table*}

{\bf Experimental Setup.} We design our experimental setting to compare the training and generalization performance of \texttt{RoiRL} against \texttt{TTRL}. We evaluate the learning approaches on three mathematical reasoning benchmarks: MATH500 \cite{math500}, AMC \cite{math500}, and AIME2024 \cite{aime2024}. The MATH500 dataset is further divided into 400 training (MATH500 \emph{Train}) and 100 test (MATH500 \emph{Test}) problems. All training algorithms are run on the unlabeled problems from the \emph{Train} split of MATH500, which defines the problem dataset $\mathcal{P}_n$. Using ground-truth labels, we then measure accuracy on the MATH500 \emph{Train} split, the MATH500 \emph{Test} split, AMC, and AIME2024 to assess the generalization capabilities of the learning methods. For base models, we use three reasoning-oriented LLMs of diverse sizes: Qwen2.5-Math-1.5B \cite{qwen}, Phi-4-mini-reasoning-4B \cite{phi4}, and Llama-3.2-3B-Instruct \cite{llama}. These models already demonstrate good reasoning capabilities and differ sufficiently in design to enable robust validation of our learning approaches across architectures and training paradigms \cite{spurious}

% {\bf Datasets.} We evaluated our proposed RoiRL method and compared with TTRL on 3 mathematical datasets: MATH500, AMC, and AIME2024. The MATH500 dataset was additionally split into 400 train and 100 test problems to allow for generalization evaluation. These 400 problems are used for training in all of the experiments and AMC and AIME2024 datasets are used as validation.

% {\bf Models.} We evaluated our method on 3 large language models: Qwen2.5-Math-1.5B, Phi-4-mini-reasoning and Llama-3.2-3B-Instruct. These are 1.5-4B language models with reasoning capabilities.

{\bf Training.} Both \texttt{TTRL} and \texttt{RoiRL} use the majority vote signal $\tilde{r}_k$ as a training signal to improve the base model. For each problem $x_i$, we generate $k = 10$ candidates, with which the majority vote signal $\tilde{r}_k$ is defined. \texttt{TTRL} is implemented using GRPO \cite{grpo} with the KL regularizer $\beta = 0.1$, that we compare to two flavors of \texttt{RoiRL}: the first one uses an exponential function $g_\beta: x\rightarrow \exp(x/\beta)$ with $\beta = 0.1$ to mimic \texttt{TTRL}'s behavior, and the second uses the identity function $g_I: x\rightarrow x$, reducing the offline update to simple \emph{supervised finetuning} on good generated answers. Implementation details are developed in Appendix~\ref{app:exp_details}.

% {\bf Training.} We compared three training methods using the same hyperparameters. For RoiRL method we used \verb|SFTTrainer| and our custom \verb|WeightedSFTTrainer| that allows assigning different weights to training samples. We used $k=10$ in all of the experiments. We train for 3 epochs in each round. To compare with TTRL we copied the reward function from the original paper and used it with \verb|GRPOTrainer|.

% In our experiments we compare one round of RoiRL to one epoch of TTRL as both of them require the same number of generations, namely, $k n$. Note, however, that TTRL is more computationaly demanding and thus requires more time per epoch. More details of the computational advantages of Roi-RL over TTRL are discussed in \ref{computational}. 

{\bf Decoding strategies and baselines.} We evaluate our learning methods based on their ability to improve the base model $\pi_0$. Since training uses the majority-vote signal with $k = 10$, improving a single sampled answer ($k = 1$) is relatively straightforward, as the model effectively distills the majority vote. To assess performance beyond this distilled signal, we compare models using $k = 1$ ($\texttt{maj}_{1}$) and $k = 10$ ($\texttt{maj}_{10}$). We also report the base model with $k = 128$ ($\texttt{maj}_{128}$) to evaluate whether the learned model can surpass this strong but costly baseline.

% using the accuracy of our models by the accuracy of the generated answer, compared to the ground truth. majority vote accuracy with $k=10$ and $k=128$. We used greedy decoding and majority voting with $k=10$ and $k=128$ as baselines. We remind you that computational complexity grows linear with $k$.

% RoiRL Phi4 6500s per round
% TTRL Phi4 17000s per epoch

{\bf Results.} Table~\ref{tab:results} shows that \texttt{RoiRL} outperforms the online RL-based \texttt{TTRL} approach in the majority of cases while being up to $2.5\times$ faster (see Appendix~\ref{computational}). \texttt{RoiRL} with $g_\beta$ improves over \texttt{TTRL} despite both targeting a KL-regularized objective, while the $g_I$ variant achieves the best overall results, suggesting that alternative reward transforms beyond KL may be more effective. \texttt{RoiRL} is a self-improving method and not merely a distillation of majority voting: $\texttt{maj}_{1}$ decoding with obtained models can surpass the base model’s $\texttt{maj}_{10}$ decoding, and $\texttt{maj}_{10}$ decoding with obtained models can outperform the base model with costly $\texttt{maj}_{128}$ decoding, even on unseen problems. Extended results with training curves and developed discussions are provided in Appendix~\ref{sec:extended_empirical_results}.

\section{Conclusion}

We proposed \texttt{RoiRL}, a simple and scalable approach for self-supervised reasoning in LLMs that converts majority-vote signals into efficient offline updates. Unlike online RL approaches such as \texttt{TTRL}, \texttt{RoiRL} requires no reference model, achieves greater stability and speed, and consistently improves accuracy across benchmarks, demonstrating that lightweight offline reinforcement learning is sufficient for self-improvement in reasoning tasks. Future work will extend this approach by evaluating \texttt{RoiRL} on larger LLMs to further validate scalability, exploring alternative ground-truth estimation strategies beyond majority vote, and studying the impact of different reward transforms, which we found to substantially influence performance.

\newpage

\bibliography{bibfile}
\bibliographystyle{plain}

\newpage

%%%%%%%%%%%%%%%%%%%%%%%%%%%%%%%%%%%%%%%%%%%%%%%%%%%%%%%%%%%%

\appendix

% \section{Related Work}

% \paragraph{Reasoning in LLMs.} Cite papers on reasoning in general, CoT, RLVR. Methods to learn on reasoning data PPO, GRPO. Access to labels or verifications. TTRL uses weak signal, maybe other methods?

% \paragraph{Offline Reinforcement Learning.}

\section{Technical Discussions and Proofs}

\subsection{Useful Lemma}

Our main theoretical result is based on the use of the following Lemma.

\begin{lemma}\label{lemma}
Let $g$ a positive function. The solution of the weighted likelihood objective of the form:
\begin{align*}
    \argmax_\pi \left\{\mathbb{E}_{ (c,y) \sim \pi_0(\cdot|x)} \left[g(x,c,y) \log \pi(c,y|x) \right] \right\}\,
\end{align*}
can be computed analytically and is:
$$\forall x, (y, c)\,, \pi^\star(c,y|x) \propto g(x,c,y) \pi_0(c,y|x)\,.$$
\end{lemma}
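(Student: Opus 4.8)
The plan is to reduce the problem to a per-prompt optimization and then recognize the objective as a shifted, rescaled Kullback–Leibler divergence, whose minimizer is known in closed form.

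First I would fix an arbitrary prompt $x$. The objective decouples across prompts, so it suffices to maximize, over probability distributions $\pi(\cdot\mid x)$ on the space of pairs $(c,y)$,
\[
F_x(\pi) \;=\; \mathbb{E}_{(c,y)\sim\pi_0(\cdot\mid x)}\big[g(x,c,y)\,\log\pi(c,y\mid x)\big].
\]
Write $w(c,y) = g(x,c,y)\,\pi_0(c,y\mid x)$, a nonnegative measure since $g$ is positive, and set $Z(x) = \mathbb{E}_{\pi_0}\!\big[g(x,\cdot,\cdot)\big]$, which I assume finite. Define $q(c,y\mid x) = w(c,y)/Z(x)$; this is precisely the candidate solution $\pi^\star \propto g\,\pi_0$.

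Next I would rewrite $F_x$ in terms of $q$. For any $\pi$ absolutely continuous with respect to $\pi_0(\cdot\mid x)$ (outside of which $F_x$ is $-\infty$ or undefined, so only these $\pi$ are worth considering),
\[
F_x(\pi) \;=\; Z(x)\,\mathbb{E}_{q}\big[\log\pi\big] \;=\; Z(x)\Big(\mathbb{E}_{q}[\log q] \;-\; \operatorname{KL}\!\big(q(\cdot\mid x)\,\|\,\pi(\cdot\mid x)\big)\Big).
\]
Since $Z(x)>0$ and the first term is independent of $\pi$, maximizing $F_x$ is equivalent to minimizing $\operatorname{KL}(q\|\pi)$; by Gibbs' inequality (nonnegativity of KL, with equality iff the arguments coincide), the unique maximizer is $\pi(\cdot\mid x) = q(\cdot\mid x) \propto g(x,\cdot,\cdot)\,\pi_0(\cdot\mid x)$, which is the claim. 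An equivalent route is a Lagrange-multiplier computation: stationarity of $w(c,y)\log\pi(c,y) - \lambda\pi(c,y)$ gives $\pi(c,y) = w(c,y)/\lambda$, enforcing $\int\pi=1$ fixes $\lambda = Z(x)$, and strict concavity of $t\mapsto\log t$ certifies this stationary point as the global maximum.

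The main obstacle is not the algebra but the measure-theoretic bookkeeping: the pairs $(c,y)$ range over a combinatorially large, effectively infinite space of token sequences, so I must be careful that (i) $Z(x)$ is finite — it suffices that $g$ is bounded, or more generally $\pi_0$-integrable — so that $q$ is a bona fide probability distribution; (ii) the comparison class is restricted to $\pi \ll \pi_0(\cdot\mid x)$, since otherwise the expectation $\mathbb{E}_{\pi_0}[g\log\pi]$ is $-\infty$ or ill-defined; and (iii) the support of $\pi^\star$ coincides with that of $\pi_0$ exactly because $g>0$, so the positivity hypothesis is precisely what keeps $q$ equivalent to $\pi_0$ and prevents degenerate collapse. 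Once these regularity points are dispatched, the Gibbs/KL argument closes the proof immediately.
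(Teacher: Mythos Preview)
Your proof is correct. Your primary route---rewriting the objective as $Z(x)\big(\mathbb{E}_q[\log q]-\operatorname{KL}(q\|\pi)\big)$ and invoking Gibbs' inequality---differs from the paper's, which proceeds directly by Lagrange multipliers: it forms the Lagrangian with the normalization constraint, sets the derivative in $\pi(c,y\mid x)$ to zero to obtain $\pi(c,y\mid x)=\pi_0(c,y\mid x)g(x,c,y)/\lambda$, and solves for $\lambda$ from $\sum_{c,y}\pi=1$. Your approach is arguably cleaner: it delivers the global maximizer and its uniqueness in one stroke via nonnegativity of KL, whereas the Lagrange route yields only a stationary point and tacitly relies on concavity to conclude optimality. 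You also attend to regularity issues (finiteness of $Z(x)$, absolute continuity, positivity of $g$ preserving support) that the paper leaves implicit by working over finite sums. The Lagrange computation you sketch at the end is essentially the paper's argument verbatim.
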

\begin{proof}
We prove this Lemma below for completeness. The objective is a constrained optimization problem, that can be solved by Lagrange multipliers. For a fixed input $x$, we want to maximize:
$$J(\pi) = \sum_{c,y} \pi_0(c,y|x) g(x,c,y) \log \pi(c,y|x)$$
subject to the normalization constraint:
$$\sum_{c,y} \pi(c,y|x) = 1$$

Setting up the Lagrangian:
$$\mathcal{L}(\pi, \lambda) = \sum_{c,y} \pi_0(c,y|x) g(x,c,y) \log \pi(c,y|x) - \lambda\left(\sum_{c,y} \pi(c,y|x) - 1\right)$$

Taking the partial derivative with respect to $\pi(c,y|x)$ and setting it to zero:
$$\frac{\partial \mathcal{L}}{\partial \pi(c,y|x)} = \frac{\pi_0(c,y|x) g(x,c,y)}{\pi(c,y|x)} - \lambda = 0$$

Solving for $\pi(c,y|x)$:
$$\pi(c,y|x) = \frac{\pi_0(c,y|x) g(x,c,y)}{\lambda}$$

Using the normalization constraint $\sum_{c,y} \pi(c,y|x) = 1$:
$$\sum_{c,y} \frac{\pi_0(c,y|x) g(x,c,y)}{\lambda} = 1$$

Therefore:
$$\lambda = \sum_{c,y} \pi_0(c,y|x) g(x,c,y)$$

Substituting back:
$$\pi^\star(c,y|x) = \frac{\pi_0(c,y|x) g(x,c,y)}{\sum_{c',y'} \pi_0(c',y'|x) g(x,c',y')}$$

This shows that $\pi^\star(c,y|x) \propto g(x,c,y) \pi_0(c,y|x)$, completing the proof.
\end{proof}

\subsection{Analytical Solution of \texttt{RoiRL}} \label{app:solution}

We remind the reader of the \texttt{RoiRL} algorithm below:

\RestyleAlgo{ruled}
\renewcommand{\thealgocf}{}
\begin{algorithm}
\caption{\textbf{R}easoning with \textbf{o}ffline \textbf{i}terative \textbf{R}einforcement \textbf{L}earning (\texttt{RoiRL})}
\textbf{Input}: Policy $\pi_\theta$, transforms $(g_m)_{m\in\mathbb{N}}$, prompt dataset $\mathcal{P} = \{x_i\}_{i \in [n]}$, number of candidates $k$. \\
\textbf{Initialize}: $\theta=\theta_0$, $\pi_0=\pi_{\theta_0}$ \\
\For{$m=1,2,\dots$}{
Construct offline dataset $\mathcal{D}_{m-1}$ with $\pi_{m-1}$ \\
Using $\mathcal{D}_{m-1}$, we approximate and solve the weighted log-likelihood objective
\begin{align*}
    \theta_{m} &= \argmax_\theta \left\{\sum_{i = 1}^n\mathbb{E}_{ (c,y) \sim \pi_{m-1}(\cdot|x_i)} \left[g_m(\tilde{r}_k(y,x_i, \theta_{m-1})) \log \pi_\theta(c,y|x_i) \right] \right\}\,,
\end{align*}
Set $\pi_m \leftarrow \pi_{\theta_m}$
}
\end{algorithm}

\texttt{RoiRL} leverages a set $(g_m)_{m\in\mathbb{N}}$ of increasing reward transform $g_m: \mathbb{R} \to \mathbb{R}\,, \forall m$ that implicitly control the reward influence. At iteration $m$, the analytical solution of Algorithm~\ref{alg:RoiRL} takes the form:
\begin{align*}
    \forall (c,y), x_i\,, \quad \pi_m(c,y|x_i) \propto \left(\prod_{j =1}^m g_j(\tilde{r}_k(y,x_i, \theta_{j-1}))\right) \pi_0(c,y|x_i)\,.
\end{align*}

\begin{proof}
We prove this result by induction on the iteration number $m$.

\textbf{Base Case ($m = 1$):}
At iteration $m = 1$, we solve:
\begin{align*}
    \theta_1 = \argmax_\theta \left\{\sum_{i = 1}^n\mathbb{E}_{ (c,y) \sim \pi_{0}(\cdot|x_i)} \left[g_1(\tilde{r}_k(y,x_i, \theta_{0})) \log \pi_\theta(c,y|x_i) \right] \right\}
\end{align*}

As the optimization problem is decomposable, we can look at each $x_i$ independently. Let $x_i$ be a prompt from $\mathcal{P}$. By the previous Lemma~\ref{lemma}, the analytical solution is:
\begin{align*}
    \pi_1(c,y|x_i) \propto g_1(\tilde{r}_k(y,x_i, \theta_{0})) \pi_0(c,y|x_i)
\end{align*}

This matches our claimed form with $m = 1$:
\begin{align*}
    \pi_1(c,y|x_i) \propto \left(\prod_{j=1}^1 g_j(\tilde{r}_k(y,x_i, \theta_{j-1}))\right) \pi_0(c,y|x_i) = g_1(\tilde{r}_k(y,x_i, \theta_{0})) \pi_0(c,y|x_i)
\end{align*}

\textbf{Inductive Step:}
Assume the claim holds for some iteration $m-1$, i.e.:
\begin{align*}
    \pi_{m-1}(c,y|x_i) \propto \left(\prod_{j=1}^{m-1} g_j(\tilde{r}_k(y,x_i, \theta_{j-1}))\right) \pi_0(c,y|x_i)
\end{align*}

At iteration $m$, we solve:
\begin{align*}
    \theta_m = \argmax_\theta \left\{\sum_{i = 1}^n\mathbb{E}_{ (c,y) \sim \pi_{m-1}(\cdot|x_i)} \left[g_m(\tilde{r}_k(y,x_i, \theta_{m-1})) \log \pi_\theta(c,y|x_i) \right] \right\}
\end{align*}

By the previous lemma, the analytical solution is:
\begin{align*}
    \pi_m(c,y|x_i) &\propto g_m(\tilde{r}_k(y,x_i, \theta_{m-1})) \pi_{m-1}(c,y|x_i)
\end{align*}

Substituting the inductive hypothesis:
\begin{align*}
    \pi_m(c,y|x_i) &\propto g_m(\tilde{r}_k(y,x_i, \theta_{m-1})) \left(\prod_{j=1}^{m-1} g_j(\tilde{r}_k(y,x_i, \theta_{j-1}))\right) \pi_0(c,y|x_i) \\
    &\propto \left(\prod_{j=1}^{m} g_j(\tilde{r}_k(y,x_i, \theta_{j-1}))\right) \pi_0(c,y|x_i)
\end{align*}

This completes the induction and proves the claimed form.
\end{proof}

\subsection{\texttt{RoiRL} solves the \texttt{TTRL} objective and beyond} \label{app:equivalence}

The proposed \texttt{RoiRL} objective provides an offline, iterative alternative to the recently proposed \texttt{TTRL} algorithm. We recall that \texttt{TTRL} optimizes the KL regularized expected reward:
\begin{align*}
    \max_\pi \left\{\sum_{i = 1}^n \mathbb{E}_{(c, y)  \sim \pi(\cdot|x_i)} \left[\tilde{r}_k(y, x_i, \pi) \right] - \beta \operatorname{KL}(\pi, \pi_0|x_i) \right\}\,.
\end{align*}
This optimization problem differs from the classical regularized objective as the individual rewards depend themselves on the current policy we are optimizing. We can connect \texttt{RoiRL} and \texttt{TTRL} with the following proposition:

\prop* 

\begin{proof}

Let us focus on the \texttt{TTRL} objective. As the problem is decomposable over prompts, the optimal policy $\pi^\star$  can be recovered for each $x_i$. We then optimize:
\begin{align*}
    J_i(\pi) = \sum_{c,y} \pi(c,y|x_i) \tilde{r}_k(y, x_i, \pi) - \beta \sum_{c,y} \pi(c,y|x_i) \log \frac{\pi(c,y|x_i)}{\pi_0(c,y|x_i)}
\end{align*}

Using the method of Lagrange multipliers with the constraint $\sum_{c,y} \pi(c,y|x_i) = 1$:
\begin{align*}
    \mathcal{L} = J_i(\pi) - \lambda_i \left(\sum_{c,y} \pi(c,y|x_i) - 1\right)
\end{align*}

Taking the functional derivative with respect to $\pi(c,y|x_i)$:
\begin{align*}
    \frac{\partial \mathcal{L}}{\partial \pi(c,y|x_i)} &= \tilde{r}_k(y, x_i, \pi) + \sum_{c',y'} \pi(c',y'|x_i) \frac{\partial \tilde{r}_k(y', x_i, \pi)}{\partial \pi(c,y|x_i)} \\
    &\quad - \beta \log \frac{\pi(c,y|x_i)}{\pi_0(c,y|x_i)} - \beta - \lambda_i = 0
\end{align*}

The key challenge is the second term, which captures how changing $\pi(c,y|x_i)$ affects all other rewards $\tilde{r}_k(y', x_i, \pi)$ through the policy dependence.

For the optimal policy $\pi^\star$, rearranging:
\begin{align*}
    \beta \log \frac{\pi^\star(c,y|x_i)}{\pi_0(c,y|x_i)} &= \tilde{r}_k(y, x_i, \pi^\star) + \sum_{c',y'} \pi^\star(c',y'|x_i) \frac{\partial \tilde{r}_k(y', x_i, \pi^\star)}{\partial \pi^\star(c,y|x_i)} - \beta - \lambda_i
\end{align*}

Taking the exponential:
\begin{align*}
    \pi^\star(c,y|x_i) &= \pi_0(c,y|x_i) \exp\left(\frac{1}{\beta}\left[\tilde{r}_k(y, x_i, \pi^\star) + \sum_{c',y'} \pi^\star(c',y'|x_i) \frac{\partial \tilde{r}_k(y', x_i, \pi^\star)}{\partial \pi^\star(c,y|x_i)} - \beta - \lambda_i\right]\right)
\end{align*}

Using the normalization constraint to determine $\lambda_i$, the first-order conditions become:

\begin{align*}
    \pi^\star(c,y|x_i) &\propto \pi_0(c,y|x_i) \exp\left(\frac{1}{\beta}\left[\tilde{r}_k(y, x_i, \pi^\star) + \sum_{c',y'} \pi^\star(c',y'|x_i) \frac{\partial \tilde{r}_k(y', x_i, \pi^\star)}{\partial \pi^\star(c,y|x_i)}\right]\right)\,.
\end{align*}
Finally, as the rewards are indicator functions, which is discontinuous, their derivative is null almost surely. This allows us to set all the rewards partial derivative to $0$, obtaining a fixed point equation that  $\pi^\star$ verifies:
\begin{align}
    \forall (c,y)\,, \pi^\star(c,y|x) \propto \exp\left(\frac{1}{\beta} \tilde{r}_k(y, x_i, \pi^\star) \right)\pi_0(c,y|x_i)\,.
\end{align}
\texttt{TTRL} targets this solution by solving Equation~\ref{eq:kl_regularised}. However, obtaining this solution can also be conducted by solving the fixed point equation directly. The fixed point solution can be solved by iterating over $m \ge 1$ the following:
\begin{itemize}
    \item Collect a dataset $\mathcal{D}_{m-1}$ with $\pi_{m-1}$.
    \item Update $\pi_m \propto \exp\left(\frac{1}{\beta} \tilde{r}_k(y, x_i, \pi_{m-1}) \right)\pi_0(c,y|x_i) $ 
\end{itemize}
until you reach convergence of the policy $\pi_m$. The update step can be solved by optimization, and can be implemented by the following Algorithm:

\RestyleAlgo{ruled}
\begin{restatable}{algorithm}{alg2}
\caption{Fixed Point Approach}\label{alg:fp}
\textbf{Input}: Policy $\pi_\theta$, prompt dataset $\mathcal{P} = \{x_i\}_{i \in [n]}$, number of candidates $k$. \\
\textbf{Initialize}  $\pi_0$. \\
\For{$m=1,2,\dots$}{
Construct offline dataset $\mathcal{D}_{m-1}$ with $\pi_{m-1}$ \\
Set $b_{m-1}(y,x_i) = \tilde{r}_k(y,x_i, \pi_{m-2})$ if $m > 2$ else $b_{m-1}(y,x_i) = 0$. \\
Using $\mathcal{D}_{m-1}$, we approximate and solve the weighted log-likelihood objective
\begin{align}
    \pi_m &= \argmax_\pi \left\{\sum_{i = 1}^n\mathbb{E}_{\pi_{m-1}(\cdot|x_i)} \left[\exp\left(\frac{1}{\beta}(\tilde{r}_k(y,x_i, \theta_{m-1}) - b_{m-1}(y,x_i))\right) \log \pi_\theta(c,y|x_i) \right] \right\}\,.
\end{align}
}
\end{restatable}
This algorithm is exactly Algorithm~\ref{alg:RoiRL} with the particular choice of $g_m$ to be:
\begin{align}\label{eq:g_ttrl}
    g_m(y,x) = \exp\left(\frac{1}{\beta} \left(\tilde{r}_k(y,x, \pi_{m-1}) - b_m(x,y) \right) \right)\,,
\end{align}
with $b_{m-1}(y,x) = \tilde{r}_k(y,x, \pi_{m-2})$ if $m > 2$ else $b_{m-1}(y,x) = 0$. \texttt{RoiRL} can exactly target the optimal solution of \texttt{TTRL} with the choice of $g$ in Equation~\ref{eq:g_ttrl}. This ends the proof.

\end{proof}
% \subsection{Future Questions, This will be commented.}
% \begin{itemize}
%     \item We can decouple the algorithm even more:
%     \begin{itemize}
%         \item The method can be decomposed into: Find a ground truth, run RoiRL. 
%         \item To find the ground truth, we can use majority votes, or any other deterministic approach. For example, CoT without prompting.
%     \end{itemize}
%     \item Compare to Iterative Reasoning preference optimization \cite{}.
%     \item Define reward with self reflection: Sample $c,y$, feed it again to the LLM and say if the answer is correct or not? This avoids generating a lot of CoT.
%     \item Can we use all the dataset instead of only the last $\mathcal{D}_m$? The answer is yes, and it is justified theoretically.
% \item How does proper importance weighting compare to reward weighting tricks in this case?
%     \item Sample efficiency between regularized online RL and weighted offline RL? What about optimization efficiency?
%\item Offline Reinforcement Learning for Reasoning: Self Reasoning, exploit all previous samples to define the majority vote? maybe a weighted scheme? What about only attributing 0 to not the majority vote, but to less represented answers? Natural selection.
% \item 
% \end{itemize}

\section{Extended Empirical Results}
\label{sec:extended_empirical_results}

\subsection{Experimental Details}

\subsubsection{Implementation Details} \label{app:exp_details}

We compared three training methods using the same hyperparameters. In all experiments, we generate $k = 10$ candidates for each problem and then train on these candidates using the chosen reward function $g$. We used our custom \verb|WeightedSFTTrainer| to implement learning with $g_\beta : x \rightarrow \exp(x / \beta)$ and used standard \verb|SFTTrainer| to implement $g_I : x \rightarrow x$, as it is equivalent to using supervised finetuning on candidates with answers corresponding to majority vote. We will refer to the proccess of generating candidates and training on them as 1 round of \texttt{RoiRL}. In both methods, after generating candidates, we train for $3$ epochs during every round. We trained all methods for $15$ rounds, with early stopping if $\texttt{maj}_{10}$ accuracy on the train dataset did not improve for more than $5$ rounds. We take the round with the best performance on train, when reporting the final accuracies.

All hyper-parameters were set to their default values in \verb|SFTTrainer|, the only exception being the reduction of the learning rate from $2\cdot10^{-5}$ to $10^{-6}$ for Llama-3.2 with $g_I$ because higher values result in overfitting. We implemented \texttt{TTRL} with Huggingface \verb|GRPOTrainer|  and used its default parameters.

In our experiments, we compare one round of \texttt{RoiRL} with one epoch of \texttt{TTRL} as both require the same number of generations, namely, $k n$. However, note that \texttt{TTRL} is computationally more demanding and thus requires more time per epoch. More details of the computational advantages of \texttt{RoiRL} over \texttt{TTRL} are discussed in \ref{computational}.

\subsubsection{Evaluation} To obtain the majority vote, we generate $k$ answers from the model using temperature sampling with a temperature equal to $1.0$. The maximum number of new tokens is set to 1024. We then extract the answers from the generated solution, find the one that has the most occurrences, and choose it as our majority vote answer. If two answers have equal number of occurrences, we pick one of them randomly.

To extract the answer, we find the first occurrences of \verb|\boxed{}| in the generated solution and consider everything that was put inside the brackets as the final answer. Qwen2.5-Math automatically puts the final answer inside \verb|\boxed{}| and to ensure the same behavior from the other two models, we add a phrase ``\verb|Put your answer in \boxed{}|'' to the prompt.

Since the same answer can be written in multiple ways (for example, $0.5$ or $\frac{1}{2}$) we used \verb|sympy| to parse latex and then compared the answers as sympy objects. In case of parsing or comparison errors or timeouts, we fallback and compared the answers as strings.

\subsubsection{Computational advantages of \texttt{RoiRL}}
\label{computational}

The proposed \texttt{RoiRL} method has several computational advantages over \texttt{TTRL}. Firstly, we strictly separate the generation and the training phases and this allows better batching during generation. More precisely, we can generate answers to multiple questions in a single batch, unlike online RL algorithms such as \texttt{TTRL}. Secondly, our reward method does not require storing logits, so we can use larger batch size during generation compared to \texttt{TTRL}. Unlike GRPO, our method does not use a reference model to compute the reward, further reducing its computational cost. And finally, using a sparse reward function (e.g., $g_I$) in \texttt{RoiRL} can significantly speed up the training phase, especially in the early stages, when the majority answers are sparse. With all these advantages combined, we achieve performance more than $\times 2.5$ times faster per 1 round of \texttt{RoiRL} compared to $1$ epoch of \texttt{TTRL}. The exact time evaluations for \texttt{TTRL} and \texttt{RoiRL} with sparse ($g_I$) and dense ($g_\beta$) reward functions are presented in Table~\ref{tab:time_compr}.

All experiments were conducted on a Google Cloud Platform (GCP) instance with a single NVIDIA A100 (80Gb VRAM), 12 vCPUs and 170 GB RAM running on Debian GNU/Linux 11 (bullseye). \texttt{RoiRL} and \texttt{TTRL} methods were implemented using the Hugging Face Transformers library v4.52.4, TRL v0.18.2 and PyTorch v2.7.1 with CUDA 12.4.

\begin{table}[h!]
  \caption{Time per round comparison}
  \label{tab:time_compr}
  \centering
  \begin{tabular}{ll}
    \toprule
    Method & Time per round \\
    \midrule
    \texttt{RoiRL}, $g_I$ (sparse reward) & 6552.5s \\
    \texttt{RoiRL}, $g_\beta$ (dense reward) & 8883.5s \\
    \texttt{TTRL} & 17019.25s \\
    \bottomrule
  \end{tabular}
  \vspace{-0.25cm}
\end{table}

\subsection{Detailed Results and Discussions}

Figures \ref{img:plot_qwen}, \ref{img:plot_phi} and \ref{img:plot_llama} illustrate the training curves for Qwen2.5-Math-1.5B \cite{qwen}, Phi-4-mini-reasoning-4B \cite{phi4}, and Llama-3.2-3B-Instruct \cite{llama} trained with \texttt{RoiRL} and \texttt{TTRL}. The greedy decoding baseline and the $\texttt{maj}_{128}$ baseline are represented by horizontal lines.

\texttt{RoiRL} consistently improves the accuracy on the MATH500-train dataset and successfully generalizes on a holdout MATH500-test dataset, AMC and AIME datasets. Note that unlike supervised finetuning, \texttt{RoiRL} does not require ground-truth labels even on the train dataset, so this is a self-improvement process. Compared to \texttt{TTRL}, \texttt{RoiRL} demonstrates faster convergence with the same number of training rounds and less computations.

\texttt{RoiRL} improves not only the sampling accuracy (dotted lines), but also the $\texttt{maj}_{10}$ accuracy (solid lines). Moreover, for Qwen-2.5 and Phi-4, after several epochs the sampling accuracy exceeds the initial $\texttt{maj}_{10}$ accuracy. This demonstrates how \texttt{RoiRL} does not just distill the majority vote performance into the base model, but improves the general ability to solve mathematical problems.

\begin{figure}[!ht]
    \centering
    \includegraphics[width=1\textwidth]{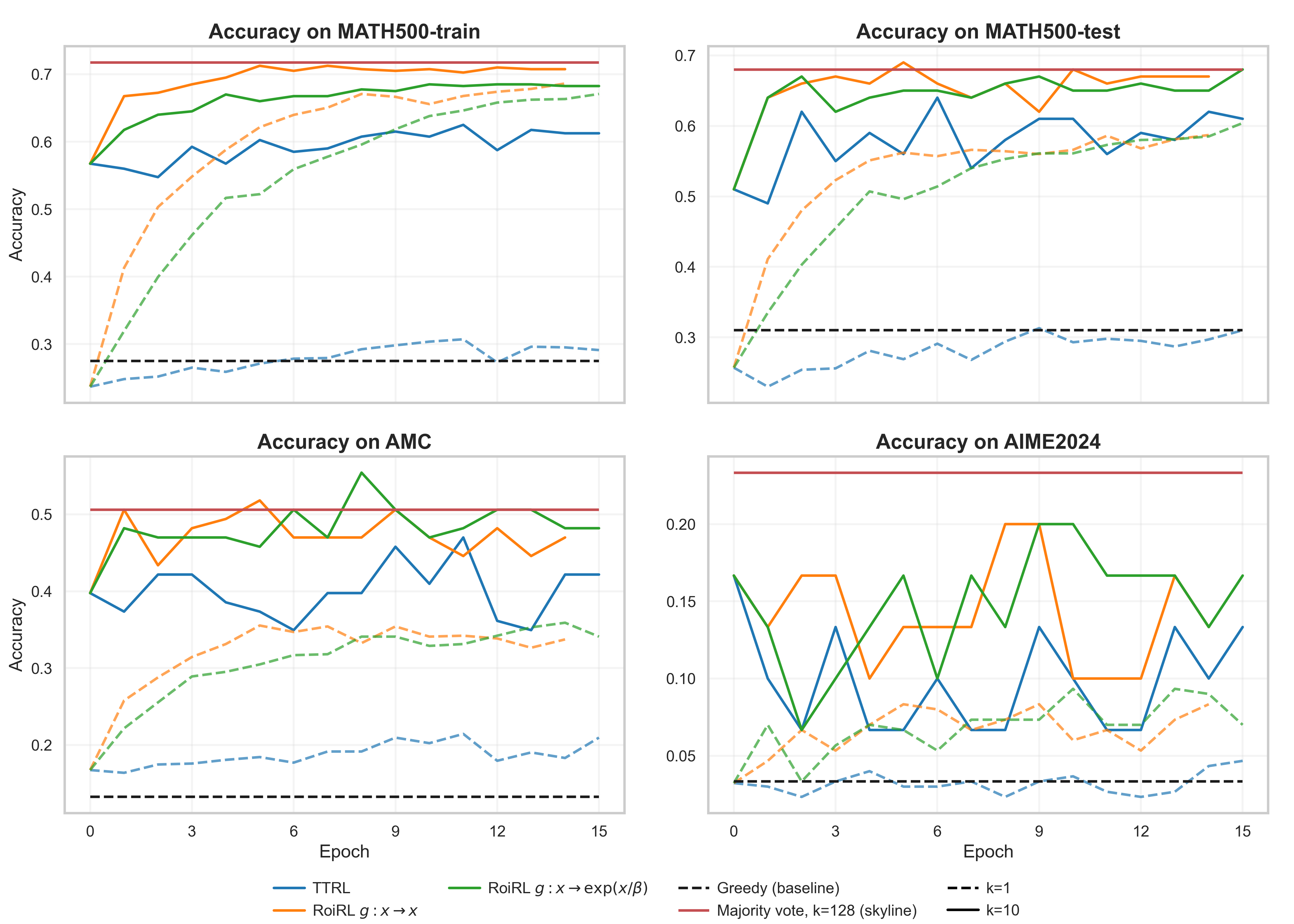}
    \caption{Training curves for Qwen-2.5-Math}
    \label{img:plot_qwen}
\end{figure}

\begin{figure}[!ht]
    \centering
    \includegraphics[width=1\textwidth]{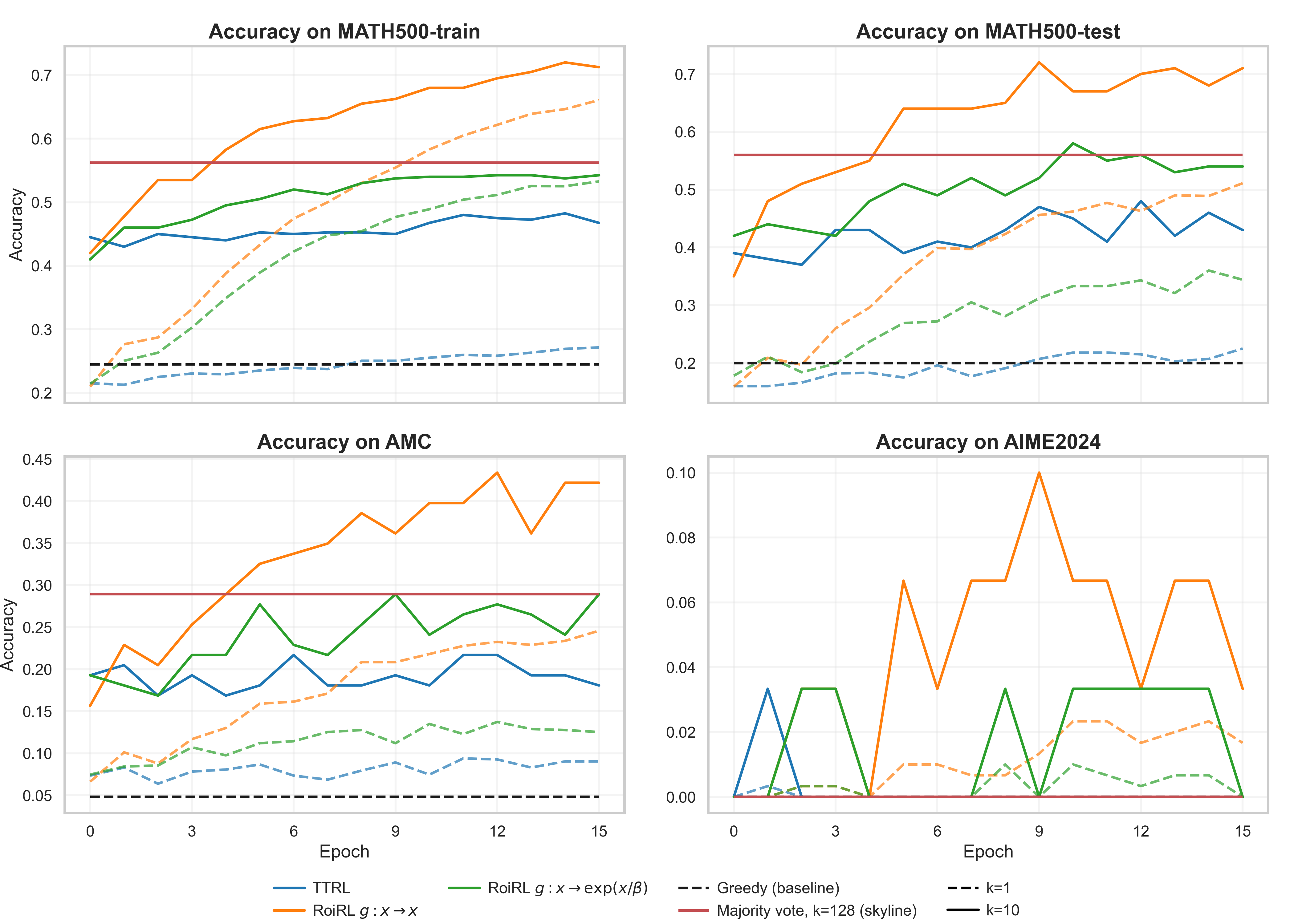}
    \caption{Training curves for Phi-4}
    \label{img:plot_phi}
\end{figure}

\begin{figure}[!ht]
    \centering
    \includegraphics[width=1\textwidth]{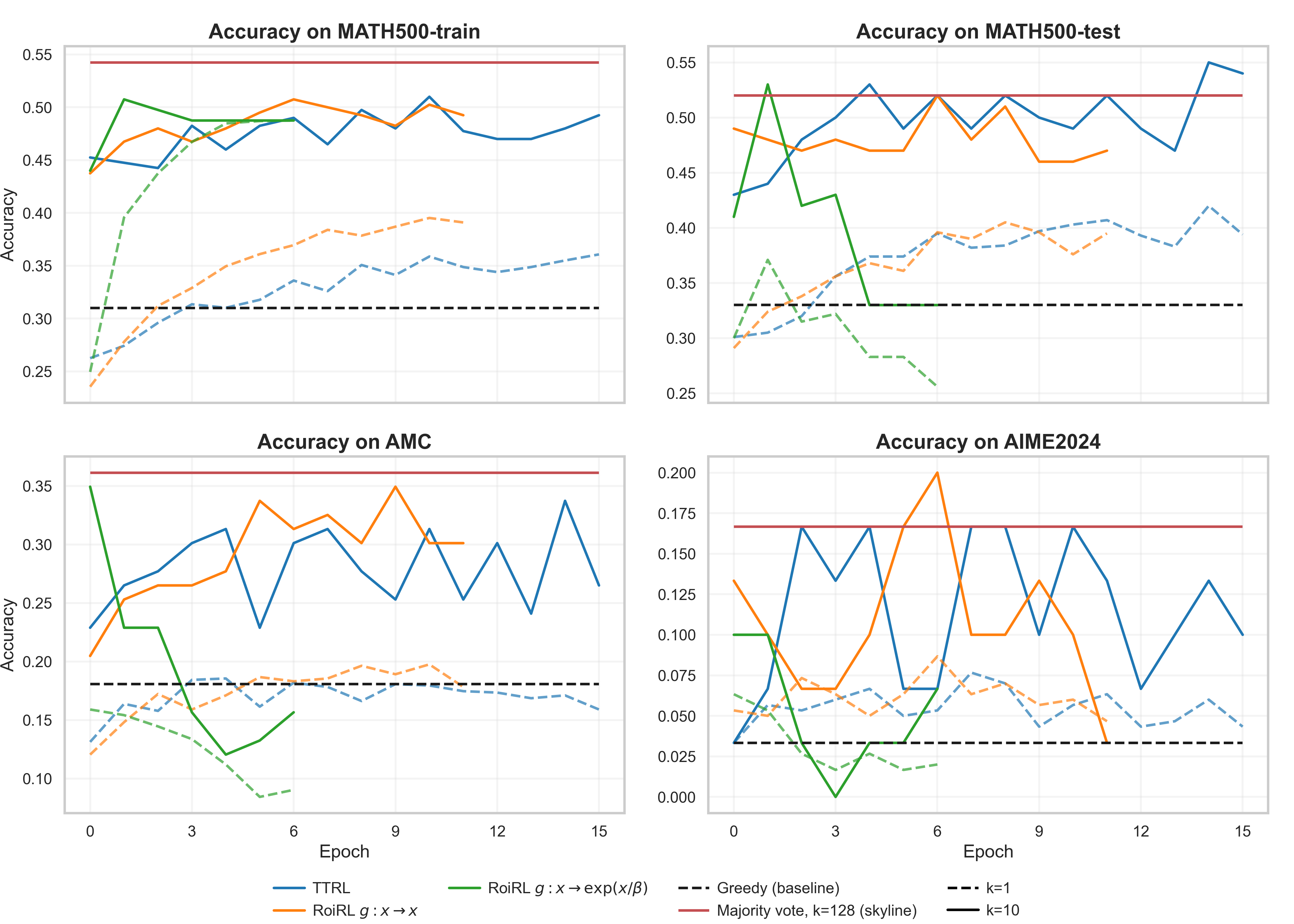}
    \caption{Training curves for Llama-3.2}
    \label{img:plot_llama}
\end{figure}

% \begin{figure}[h!]
%     \centering
%     \includegraphics[width=1\textwidth]{images/entropies/entropy_comparison_train.png}
%     \caption{Entropies on MATH500-train}
% \end{figure}

% \begin{figure}[h!]
%     \centering
%     \includegraphics[width=1\textwidth]{images/entropies/entropy_comparison_test.png}
%     \caption{Entropies on MATH500-test}
% \end{figure}

Finally, Figure \ref{img:entropies} illustrates the evolution of the average entropy during training for Qwen2.5 on MATH500-Train and Test. When using the \texttt{RoiRL}, entropy rapidly decreases to almost zero. However, for \texttt{TTRL}, the entropy stays relatively high during the whole training process. This can explain faster convergence of \texttt{RoiRL} during our experiments. In addition, the fast reduction of the entropy to zero with \texttt{RoiRL} raises a natural question of applying more regularization, implicitly by reducing the learning rate or using alternative reward functions,  which may be the subject of further research.

\begin{figure}[!ht]
    \centering
    \includegraphics[width=1\textwidth]{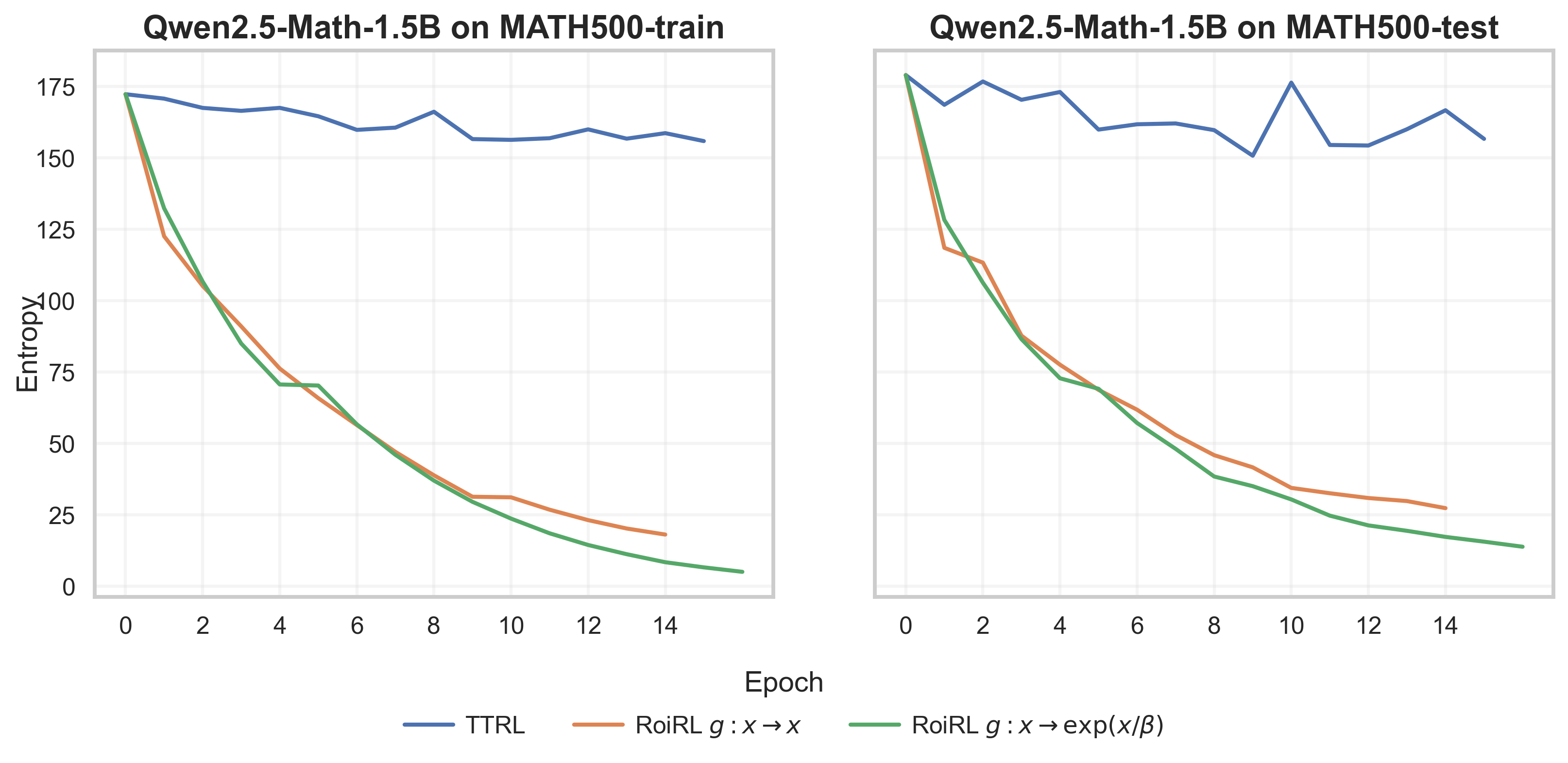}
    \caption{Entropies for Qwen2.5 on MATH500}
    \label{img:entropies}
\end{figure}

\end{document}